\newcommand\tstrut{\rule{0pt}{2.4ex}}
\newcommand\bstrut{\rule[-1.0ex]{0pt}{0pt}}
\newcommand{\tr}{\operatorname{tr}}
\newcommand{\prox}{\operatorname{prox}}
\newcommand{\argmin}{\operatornamewithlimits{argmin}}
\newtheorem{theorem}{Theorem}[section]
\newtheorem{proposition}[theorem]{Proposition}
\newtheorem{definition}[theorem]{Definition}
\newcolumntype{a}{>{\columncolor{gray!35}}c}
\title{Learning Multiple Visual Tasks while Discovering their Structure}
\author{Carlo Ciliberto  \thanks{
       Laboratory for Computational and Statistical Learning, Istituto Italiano di Tecnologia,
       Via Morego, 30,
       16100, Genova, Italy, ({\tt cciliber@mit.edu})} \ 
 \and
Lorenzo Rosasco  $^*$ \thanks{
       DIBRIS, Universit\`a di Genova,
       Via Dodecaneso, 35,
       16146, Genova, Italy, ({\tt lrosasco@mit.edu})} \ 
 \and
Silvia Villa $^*$
}
\date{}
\begin{document} 
\maketitle

\begin{abstract}

Multi-task learning is a natural approach for computer vision applications that require the simultaneous solution of several distinct but related problems, e.g. object detection, classification, tracking of multiple agents, or denoising, to name a few. The key idea is that exploring task relatedness (structure) can lead to improved performances.

In this paper, we propose and study a novel sparse, non-parametric approach 
exploiting the theory of Reproducing Kernel Hilbert Spaces for vector-valued functions.
We develop a suitable regularization framework which can be formulated as a convex optimization problem,  
and  is provably solvable using an alternating minimization approach. Empirical tests  
show that the proposed method compares favorably to state of the art techniques and further allows to recover
interpretable structures, a  problem of interest in its own right.

\end{abstract}

\section{Introduction}

Several problems in computer vision and image processing, such as  object detection/classification, image denoising, inpainting etc., require solving multiple learning tasks at the same time. In such settings a natural question is to ask whether it could be beneficial to solve all the tasks jointly, rather than separately. This idea is at the basis of the field of multi-task learning, where the joint solution of different problems has the potential to exploit tasks relatedness (structure) to improve learning. Indeed, when knowledge about task relatedness is available, it can be profitably incorporated in multi-task learning approaches for example by designing  suitable embedding/coding schemes, kernels or regularizers, see \cite{micchelli04,evgeniou05,alvarez12,fergus10,lozano10}.

The more interesting case, when knowledge about the tasks structure is not known a priori, has been the subject
of recent studies. Largely influenced by the success of sparsity based methods, a common approach has been that of considering linear models for each task coupled with suitable parameterization/penalization enforcing task relatedness, for example encouraging the selection of features simultaneously important for all tasks~\cite{argyriou08} or for specific subgroups of related tasks~\cite{jacob08,jayaraman14,zhong12,kang11,hwang11,kumar12}. Other linear methods adopt hierarchical priors or greedy approaches to recover the taxonomy of tasks~\cite{salakhutdinov11,torralba04}. A different line of research has been devoted to the development of non-linear/non-parametric approaches using kernel methods -- either from a Gaussian process~\cite{alvarez12,zhong12} or a regularization perspective~\cite{alvarez12,dinuzzo11}.

This paper follows this last line of research, tackling in particular two issues only partially addressed in previous works. The first is the development of a regularization framework to learn and exploit the tasks structure, which is not only important for prediction, but also for interpretation. Towards this end, we propose and study a family of matrix-valued reproducing kernels, parametrized so to enforce sparse relations among tasks. A novel algorithm dubbed Sparse Kernel MTL is then proposed considering a Tikhonov regularization approach.
The second contribution is to provide a sound computational framework to solve the corresponding minimization problem. While we follow a fairly standard alternating minimization approach, unlike most previous work we  can exploit  results in convex optimization to prove the  convergence of the considered procedure. 
The latter has an interesting interpretation where supervised and unsupervised learning steps are alternated: first, given a structure, multiple tasks are learned jointly,  then the structure is updated. 
%
We support the proposed method with an experimental analysis both on synthetic and real data, including classification and detection datasets.  The obtained results show that Sparse Kernel MTL 
can achieve state of the art performances while unveiling the structure describing tasks relatedness. 

The paper is organized as follows: in Sec.~\ref{sec:model} we provide some background and notation in order to motivate and introduce the Sparse Kernel MTL model. In Sec.~\ref{sec:optimization} we discuss an alternating minimization algorithm to provably solve the learning problem proposed. Finally, we discuss empirical evaluation in Sec.~\ref{sec:empirical}.







{\bf Notation.} With $S^n_{++} \subset S^n_+ \subset S^n \subset \mathbb{R}^{n \times n}$ we denote respectively the space of positive definite, positive semidefinite (PSD) and symmetric $n \times n$ real-valued matrices. $O^n$ denotes the space of orthonormal $n \times n$ matrices. For any $M\in\mathbb{R}^{n \times m}$, $M^\top$ denotes the transpose of $M$. For any PSD matrix $A\in S_+^n$, $A^\dagger \in S_+^n$ denotes the pseudoinverse of $A$.  We denote by $I_n\in S_{++}^n$ the $n \times n$ identity matrix. We use the abbreviation l.s.c. to denote lower semi-continuous functions (i.e. functions with closed sub-level sets)~\cite{boyd04}.


\section{Model}\label{sec:model}

We formulate the problem of solving multiple learning tasks as that of learning a vector-valued function whose output components correspond to individual predictors. We consider the framework originally introduced in~\cite{micchelli04} where the well-known concept of Reproducing Kernel Hilbert Space is extended to spaces of vector-valued functions. In this setting the set of tasks relations has a natural characterization in terms of a positive semidefinite matrix. By imposing a sparse prior on this object we are able to formulate our model, Sparse Kernel MTL, as a kernel learning problem designed to recover the most relevant relations among the tasks.

In the following we review basic definitions and results from the theory of Reproducing Kernel Hilbert Spaces that will allow in Sec.~\ref{sec:sparse_relation_learning} to motivate and introduce our learning framework. In Sec.~\ref{sec:previous_work} we briefly draw connections of our method to previously prosed multi-task learning approaches.

\subsection{Reproducing Kernel Hilbert Spaces for Vector-Valued Functions}\label{sec:RKHSvv}

We consider the problem of learning a function $f: \mathcal{X} \to \mathcal{Y}$ from a set of empirical observations $\{(x_i,y_i)\}_{i=1}^n$ with $x_i \in \mathcal{X}$ and $y_i\in\mathcal{Y} \subseteq \mathbb{R}^T$. This setting includes learning problems such as vector-valued regression ($\mathcal{Y} = \mathbb{R}^T$), multi-label/detection for $T$ tasks ($\mathcal{Y} = \{0,1\}^{T}$) or also $T$-class classification (where we adopt the standard one-vs-all approach mapping the $t$-th class label to the $t$-th element $e_t$ of the canonical basis in $\mathbb{R}^T$). Following the work of Micchelli and Pontil~\cite{micchelli04}, we adopt a Tikhonov regularization approach in the setting of Reproducing Kernel Hilbert Spaces for vector-valued functions (RKHSvv). RKHSvv are the generalization of the well-known RKHS to the vector-valued setting and maintain most of the properties of their scalar counterpart. In particular, similarly to standard RKHS, RKHSvv are uniquely characterized by an operator-valued kernel:

\begin{definition}
Let $\mathcal{X}$ be a set and $(\mathcal{H},\langle\cdot,\cdot\rangle_{\mathcal{H}})$ be a Hilbert space of functions from $\mathcal{X}$ to $\mathbb{R}^T$. A symmetric, positive definite, matrix valued function $\Gamma : \mathcal{X} \times \mathcal{X} \to \mathbb{R}^{T \times T}$ is called a reproducing kernel for $\mathcal{H}$ if for all $x \in \mathcal{X}, c \in \mathbb{R}^T$ and $f \in \mathcal{H}$ we have that $\Gamma(x,\cdot)c \in \mathcal{H}$ and the following reproducing property holds: $\langle f(x), c\rangle_{\mathbb{R}^T} = \langle f,\Gamma(x,\cdot)c\rangle_{\mathcal{H}}.$
\end{definition}

Analogously to the scalar setting, a Representer theorem holds, stating that the solution to the regularized learning problem
\begin{equation}\label{eq:learning_problem}
    \underset{f\in\mathcal{H}}{\text{minimize}} \ \  \frac{1}{n} \sum_{i=1}^n V(y_i,f(x_i)) + \lambda \|f\|_\mathcal{H}^2
\end{equation}
is of the form $f(\cdot) = \sum_{i=1}^n \Gamma(\cdot,x_i)c_i$ with $c_i\in\mathbb{R}^T$, $\Gamma$ the matrix-valued kernel associated to the RKHSvv $\mathcal{H}$ and $V:\mathcal{Y} \times \mathbb{R}^T \to \mathbb{R}_+$ a loss function (e.g. least squares, hinge, logistic, etc.) which we assume to be convex. We point out that the setting above can also account for the case where not all task outputs $y_i = (y_{i1},\dots,y_{iT})^\top$ associated to a given input $x_i$ are available in training. Such situation would arise for instance in multi-detection problems in which supervision (e.g. presence/absence of an object class in the image) is provided only for a few tasks at the time.

\subsubsection{Separable Kernels}\label{sec:separable_kernels}

Depending on the choice of operator-valued kernel $\Gamma$, different structures can be enforced among the tasks; this effect can be observed by restricting ourselves to the family of {\it separable} kernels. Separable kernels are matrix-valued functions of the form $\Gamma(x,x') = k(x,x')A$, where $k:\mathcal{X}\times\mathcal{X}\to\mathbb{R}$ is a scalar reproducing kernel and $A \in S_+^T$ a $T \times T$ positive semidefinite (PSD) matrix. Intuitively, the scalar kernel characterizes the individual tasks functions, while the matrix $A$ describes how they are related. Indeed, from the Representer theorem we have that solutions of problem~\eqref{eq:learning_problem} are of the form $f(\cdot) = \sum_{i=1}^n k(\cdot,x_i) A c_i$ with the $t$-th task being $f_t(\cdot) = \sum_{i=1}^n k(\cdot,x_i) \langle A_t, c_t \rangle_{\mathbb{R}^T}$, a scalar function in the RKHS $\mathcal{H}_k$ associated to kernel $k$. As shown in~\cite{evgeniou05}, in this case the squared norm associated to the separable kernel $kA$ in the RKHSvv $\mathcal{H}$, can be written as
\begin{equation}\label{eq:norm}
    \|f\|_\mathcal{H}^2 = \sum_{t,s}^T A^{\dagger}_{ts} \langle f_t,  f_s \rangle_{\mathcal{H}_k}
\end{equation}
with $A^\dagger_{ts}$ the $(t,s)$-th entry of $A$'s pseudo-inverse.

Eq.~\eqref{eq:norm} shows how $A$ can model the structural relations among tasks by directly coupling predictors: for instance, by setting $A^\dagger= I_T + \gamma (\mathbf{1}\mathbf{1}^\top)/T$, with $\mathbf{1}\in\mathbb{R}^T$ the vector of all $1$s, we have that the parameter $\gamma$ controls the variance $\sum_{t=1}^T \|\bar{f} - f_t\|_{\mathcal{H}_k}^2$ of the tasks with respect to their mean $\bar{f}=\frac{1}{T} \sum_{t=1}^T f_t$. If we have access to some notion of similarity among tasks in the form of a graph with adjacency matrix $W\in S^T$, we can consider the regularizer $\sum_{t,s=1}^T W_{ts} \|f_t - f_s\|_{\mathcal{H}_k}^2 + \gamma \sum_{t}^T \|f_t\|_{\mathcal{H}_k}^2$ which corresponds to setting $A^\dagger = L + \gamma I_T$ with $L$ the graph Laplacian induced by $W$. We refer the reader to~\cite{evgeniou05} for more examples of possible choices for $A$ when the tasks structure is known.


\subsection{Sparse Kernel Multi Task Learning}\label{sec:sparse_relation_learning}

When a-priori knowledge of the problem structure is not available, it is desirable to learn the tasks relations directly from the data. In light of the observations of 
Sec.~\ref{sec:separable_kernels}, a viable approach is to parametrize the RKHSvv $\mathcal{H}$ in problem~\eqref{eq:learning_problem} with the 
associated separable kernel $kA$ and to optimize jointly with respect to both $f\in \mathcal{H}$ and $A \in S_+^T$. In the following we show how this problem corresponds to that of identifying a set of latent tasks and to combine them in order to form the individual predictors. By enforcing a sparsity prior on the set of such possible combinations, we then propose the Sparse Kernel MTL model, which is designed to recover only the most relevant tasks relations. In Sec.~\ref{sec:previous_work} we discuss, from a modeling perspective, how our framework is related to the current multi-task learning literature.


\subsubsection{Recovering the Most Relevant Relations}\label{sec:model_srl}

From the Representer theorem introduced in Sec.~\ref{sec:RKHSvv} we know that a candidate solution $f: \mathcal{X} \to \mathbb{R}^T$ to problem~\eqref{eq:learning_problem} can be parametrized in terms of the maps $k(\cdot,x_i)$, by a structure matrix $A\in S_{+}^T$ and a set of coefficient vectors $c_1,\dots,c_n \in \mathbb{R}^T$ such that $f(\cdot) = \sum_{i=1}^n k(\cdot,x_i)Ac_i$. If now we consider the $t$-th component of $f$ (i.e. the predictor of the $t$-th task), we have that
\begin{equation}\label{eq:latent}
    f_t(\cdot) = \sum_{i=1}^n k(\cdot,x_i) \langle A_t, c_i \rangle_{\mathbb{R}^T} = \sum_{s=1}^T A_{ts} g_s(\cdot)
\end{equation}
where we set $g_s(\cdot) = \sum_{i=1}^n k(\cdot,x_i)c_{is} \in \mathcal{H}_k$ for $s\in\{1,\dots,T\}$ and $c_{is}\in\mathbb{R}$ the $s$-th component of $c_i$. Eq.~\eqref{eq:latent} provides further understanding on how $A$ can enforce/describe the tasks relations: The $g_s$ can be interpreted as elements in a dictionary and each $f_t$ factorizes as their linear combination. Therefore, any two predictors $f_t$ and $f_{t'}$ are implicitly coupled by the subset of common $g_s$. 

We consider the setting where the tasks structure is unknown and we aim to recover it from the available data in the form of a structure matrix $A$. Following a denoising/feature selection argument, our approach consists in imposing a sparsity penalty on the set of possible tasks structures, requiring each predictor $f_t$ to be described by a small subset of $g_s$. Indeed, by requiring most of $A$'s entries to be equal to zero, we implicitly enforce the system to recover only the most relevant tasks relations. The benefits of this approach are two-fold: on the one hand it is less sensitive to spurious statistically non-significant tasks-correlations that could for instance arise when few training examples are available. On the other hand it provides us with interpretable tasks structures, which is a problem of interest in its own right and relevant, for example, in cognitive science~\cite{lake10}.

Following the de-facto standard choice of $\ell_1$-norm regularization to impose sparsity in convex settings, the {\it Sparse Kernel MTL} problem can be formulated as
\begin{multline}\label{eq:sparse_relation_learning}
    \underset{f\in\mathcal{H}, A\in S_{++}^T}{\text{minimize}} \ \ \frac{1}{n} \sum_{i=1}^n V(y_i,f(x_i)) \ + \\
    \lambda (\|f\|_\mathcal{H}^2 + \epsilon \tr(A^{-1}) + \mu \tr(A) + (1-\mu) \ \|A\|_{\ell_1})
\end{multline}
where $\|A\|_{\ell_1}=\sum_{t,s}|A_{ts}|$, $V:\mathcal{Y} \times \mathbb{R}^T \to \mathbb{R}_+$ is a loss function and $\lambda>0$, $\epsilon>0$, and  $\mu\in[0,1]$ regularization parameters. Here $\mu \in [0,1]$ regulates the amount of desired entry-wise sparsity of $A$ with respect to the low-rank prior $tr(A)$ (indeed notice that for $\mu = 1$ we recover the low-rank inducing framework of~\cite{argyriou08,zhang10}). This prior was empirically observed (see \cite{argyriou08,zhang10}) to indeed encourage information transfer across tasks; the sparsity term can therefore be interpreted as enforcing such transfer to occur only between tasks that are strongly correlated. Finally the term $\epsilon \tr(A^{-1})$ ensures the existence of a unique solution (making the problem strictly convex), and can be interpreted as a preconditioning of the problem (see Sec.~\ref{sec:unsupervised_step}).

Notice that the term $\|f\|_{\mathcal{H}}^2$ depends on both $f$ and $A$ (see Eq.~\ref{eq:norm}), thus making problem~\eqref{eq:sparse_relation_learning} non-separable in the two variables. However, it can be shown that the objective functional is jointly convex in $f$ and $A$ (we refer the reader to the Appendix for a proof of convexity, which extends results in~\cite{argyriou08} to our setting). This will allow in Sec.~\ref{sec:optimization} to derive an optimization strategy that is guaranteed to converge to a global solution.

\subsubsection{Previous Work on Learning the Relations among Tasks}\label{sec:previous_work}

Several methods designed to recover the tasks relations from the data can be formulated using our notation as joint learning problems in $f$ and $A$. Depending on the expected/desired tasks-structure a set of constraints $\mathcal{A} \subseteq S_{++}^T$ can be imposed on $A$ when solving a joint problem as in~\eqref{eq:sparse_relation_learning}:
\begin{itemize}
\item {\bf Multi-task Relation Learning}~\cite{zhang10}. In~\cite{zhang10}, the relaxation $\mathcal{A} = \{A | \tr(A)\leq1\}$ of the low-rank constraint is imposed, enforcing the tasks $f_t$ to span a low-dimensional subspace in $\mathcal{H}_k$. This method can be shown to be approximately equivalent to~\cite{argyriou08}.
\item {\bf Output Kernel Learning}~\cite{dinuzzo11}. Rather than imposing a hard constraint, the authors penalize the structure matrix $A$ with the squared Frobenius norm $\|A\|_F^2$.
\item {\bf Cluster Multi-task Learning}~\cite{jacob08}. Assuming tasks to be organized into distinct clusters, in~\cite{jacob08} a learning scheme to recover such structure is proposed, which consists of imposing a suitable set of spectral constraints $\mathcal{A}$ on $A$. We refer the reader to the supplementary material for further details.
\item {\bf Learning Graph Relations}~\cite{argyriouboh}. Following the interpretation in~\cite{evgeniou05} reviewed in Sec.~\ref{sec:separable_kernels} of imposing similarity relations among tasks in the form of a graph, in~\cite{argyriouboh} the authors propose a setting where a (relaxed) Graph Laplacian constraint is imposed on $A$.
\end{itemize}

\section{Optimization}\label{sec:optimization}

Due to the clear block variable structure of Eq.~\eqref{eq:sparse_relation_learning} with respect to $f$ and $A$, we propose an alternating minimization approach (see Alg.~\ref{alg:bcd}) to iteratively solve the Sparse Kernel MTL problem by keeping fixed one variable at the time. This choice is motivated by the fact that for a fixed $A$, problem~\eqref{eq:sparse_relation_learning} reduces to the standard multi-task learning problem~\eqref{eq:learning_problem}, for which several well-established optimization strategies have already been considered~\cite{alvarez12,micchelli04,evgeniou05,minh11}. The alternating minimization procedure can be interpreted as iterating between steps of supervised learning (finding the $f$ that best fits the input-output training observations) and unsupervised learning (finding the best $A$ describing the tasks structure, which does not involve the output data).

\subsection{Solving w.r.t. $f$ (Supervised Step)}

Let $A\in S_{++}^T$ be a fixed structure matrix. From the Representer theorem (see Sec.~\ref{sec:RKHSvv}) we know that the solution of problem~\eqref{eq:learning_problem} is of the form $f(\cdot) = \sum_{i=1}^n k(\cdot,x_i) A c_i$ with $c_i \in \mathbb{R}^T$. Depending on the specific loss $V$, different methods can be employed to find such coefficients $c_i$. In particular, for the least-square loss a closed form solution can be derived by taking the coefficient vector $c = (c_1^\top,\dots,c_n^\top)^\top \in \mathbb{R}^{nT}$ to be~\cite{alvarez12}:
\begin{equation}   
c = (A \otimes K + \lambda I_{nT})^{-1}y
\end{equation}
where $K \in S_+^n$ is the empirical kernel matrix associated to $k$ the scalar kernel, $y \in \mathbb{R}^{nT}$ is the vector concatenating the training outputs $y_1,\dots,y_n \in \mathbb{R}^T$ and $\otimes$ denotes the Kronecker product. A faster and more compact solution was proposed in~\cite{minh11} by adopting Sylvester's method.

\begin{algorithm}[t]
   \caption{\textsc{Alternating Minimization}}
   \label{alg:bcd}
\begin{algorithmic}
   \State {\bfseries Input:} $K$ empirical kernel matrix, $y$ training outputs, $\delta$ tolerance, $V$ loss, $\lambda,\mu,\epsilon$ hyperparameters, $S$ objective functional of problem~\eqref{eq:sparse_relation_learning}.
   \State {\bfseries Initialize:} $f_0=0$, $A_0 = I_T$ and $i=0$ 
   \Repeat
   
   \State $f_{i+1} \gets$ \textsc{SupervisedStep} $(V,K,y,A_{i},\lambda)$
    \State $A_{i+1} \gets$ \textsc{SparseKernelMTL}$(K,f_{i+1},\mu,\epsilon)$
     \State $i \gets i+1$

   \Until{$| S(f_{i+1},A_{i+1})-S(f_{i},A_{i}) | < \delta$}
\end{algorithmic}
\end{algorithm}

\subsection{Solving w.r.t the Tasks Structure (Unsupervised Step)}\label{sec:unsupervised_step}

Let $f$ be known in terms of its coefficents $c_1,\dots,c_n\in\mathbb{R}^T$. Our goal is to find the structure matrix $A\in S_{++}^T$ that minimizes problem~\eqref{eq:sparse_relation_learning}. Notice that each task $f_t$ can be written as $f_t(\cdot) = \sum_{i=1}^n k(\cdot,x_i) \langle A_t,c_i \rangle_{\mathbb{R}^T}= \sum_{i=1}^n k(\cdot,x_i)b_{i,t}$ with $b_{i,t} = \langle A_t,c_i\rangle_{\mathbb{R}^T}$. Therefore, from eq.~\eqref{eq:norm} we have
\begin{equation}\label{eq:eq}
    \|f\|_\mathcal{H}^2 = \sum_{t,s}^T A_{ts}^{-1} \langle f_t, f_s \rangle_{\mathcal{H}_k}
    = \sum_{t,s}^T \sum_{i,j} A_{ts}^{-1} k(x_i,x_j) b_{it}b_{js}
\end{equation}
where we have used the reproducing property of $\mathcal{H}_k$ for the last equality. Eq.~\eqref{eq:eq} allows to write the norm induced by the separable kernel $kA$ in the more compact matrix notation $\|f\|_\mathcal{H}^2 = \tr(B^\top K B A^{-1})$, where $B\in\mathbb{R}^{n \times T}$ is the matrix with $(i,t)$-th element $B_{it} = b_{it}$.

Under this new notation, problem~\eqref{eq:sparse_relation_learning} with fixed $f$ becomes
\begin{equation}\label{eq:actual_srl}
    \underset{A \in S_{++}^T}{\text{min.}} \tr(A^{-1} (B^\top K B+\epsilon I_T) ) + \mu \tr(A) + (1-\mu) \ \|A\|_{\ell_1}
\end{equation}
from which we can clearly see the effect of $\epsilon$ as a preconditioning term for the tasks covariance matrix $B^\top K B$.



By employing recent results from the non-smooth convex optimization literature, in the following we will describe an algorithm to optimize the Sparse Kernel MTL problem.

\subsubsection{Primal-dual Splitting Algorithm}
First order proximal splitting algorithms have been successfully applied to 
solve convex composite optimization problems, that can be written as the sum of a smooth 
component with nonsmooth ones \cite{BauCom11}. 
They proceed by splitting, i.e. by activating each term appearing in the sum individually.
The iteration usually consists of a gradient descent-like step determined by the smooth component,
and various proximal steps induced by the nonsmooth terms \cite{BauCom11}. 
In the following we will describe one of such methods, derived in \cite{Bang13,Con13}, 
to solve the Sparse Kernel MTL problem in eq.~\eqref{eq:actual_srl}.
The proposed method is primal-dual, in the sense that it also provides an additional
dual sequence solving the associated dual optimization problem.
We will rely on the sum structure of the objective function, that can be written as
$G(\cdot) + H_1(\cdot) + H_2(L(\cdot))$, with 
$G(A) = \lambda\mu \tr(A)$, $H_1(A) = \lambda(1-\mu)\|A\|_{\ell_1}$ and $H_2(A) =\lambda\epsilon \tr(A^{-1}) + i_{S_{++}^T}(A)$, where $i_{S_{++}^T}$ is the indicator function of a $S_{++}^T$ ($0$ on the set $+\infty$ outside) and enforces the hard constraint $A\in S_{++}^T$. 
$L$ is a linear operator defined as $L(A) = M A M$, where we have set $M = (B^\top K B + \epsilon I_T)^{-1/2}$. 
We recall here that a square root of a PSD matrix $P \in S_{+}^T$ is a PSD matrix $M\in S_+^T$ such that $P = M M$.
Note that $G$ is smooth with Lipschitz continuous gradient, $L$ is a linear operator and both $H_1$ and 
$H_2$ are functions for which the proximal operator can be computed in closed form.
We recall that the proximity operator at a point $y\in\mathbb{R}^m$ of a proper, convex and l.s.c. function $H: \mathbb{R}^m \to \mathbb{R} \cup \{+\infty\}$, 
is defined as
\begin{equation}\label{eq:proxdef}
    \prox_{H} (y) = \argmin_{x \in \mathbb{R}^m} \Big\{H(x) + \frac{1}{2} \| x - y \|^2\Big\}.
\end{equation}

It is well known that for any $\eta>0$, the proximal map of the $\ell_1$ norm $\eta \|\cdot\|_{\ell_1}$ is the so-called {\it soft-thresholding} operator $S_\eta(\cdot)$, which can be computed in closed form. The following result provides an explicit closed-form solution also for the proximal map of $H_2$.

\begin{proposition}\label{prop:prox_h2}
Let $Z\in S^T$ with eigendecomposition $Z = U \Sigma U^\top$ with $U \in O^T$ orthonormal matrix and $\Sigma \in S^T$ diagonal. Then
\begin{equation}\label{eq:prox_h2}
    \prox_{H_2}(Z) =  \argmin_{A \in S_{++}^T} \Big\{\tr(A^{-1}) + \frac{1}{2} \|A-Z\|_F^2\Big\}.
\end{equation}
can be computed in closed form as $\prox_{H_2} (Z) = U \Lambda U^\top$ with $\Lambda\in S_{++}^T$ diagonal matrix with $\Lambda_{tt}$ the only positive root of the polynomial $p(\lambda) = \lambda^3 - \lambda^2 \Sigma_{tt} - 1$ with $\lambda \in \mathbb{R}$.
\end{proposition}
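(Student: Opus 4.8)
The plan is to use the unitary invariance of the objective in \eqref{eq:prox_h2} to reduce it to $T$ decoupled scalar problems, each solved by its first-order optimality condition. First I would check that the problem is well posed. The functional $\Phi(A) = \tr(A^{-1}) + \tfrac12\|A-Z\|_F^2$ is strictly convex on the open cone $S_{++}^T$ (it is the sum of the convex map $A\mapsto\tr(A^{-1})$ and the strictly convex quadratic $\tfrac12\|A-Z\|_F^2$), and it is coercive in the relevant sense: $\Phi(A)\to+\infty$ as the smallest eigenvalue of $A$ tends to $0$ (through the $\tr(A^{-1})$ term) and as $\|A\|_F\to+\infty$ (through the Frobenius term). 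Hence the sublevel sets of $\Phi$ are compact subsets of $S_{++}^T$, so a minimizer exists and, by strict convexity, is unique; call it $A^\star$.

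Next I would show $A^\star$ and $Z$ share an eigenbasis. Since $\Phi$ is smooth on $S_{++}^T$ and $A^\star$ is an interior minimizer, $\nabla\Phi(A^\star)=0$. Using $\nabla_A\tr(A^{-1}) = -A^{-2}$ and $\nabla_A\tfrac12\|A-Z\|_F^2 = A-Z$, this reads $-(A^\star)^{-2} + A^\star - Z = 0$, i.e. $Z = A^\star - (A^\star)^{-2}$. Thus $Z$ is a rational function of $A^\star$ (a combination of powers of $A^\star$ and $(A^\star)^{-1}$), so $Z$ and $A^\star$ commute and are simultaneously orthogonally diagonalizable; in particular we may write $A^\star = U\Lambda U^\top$ with the same $U\in O^T$ as in $Z = U\Sigma U^\top$ and $\Lambda\in S_{++}^T$ diagonal. (The same conclusion also follows from the invariance $\Phi_Z(VAV^\top) = \Phi_{V^\top Z V}(A)$ for $V\in O^T$, combined with uniqueness of the minimizer.)

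Substituting $A = U\Lambda U^\top$ and using orthogonal invariance of trace and Frobenius norm gives $\Phi(U\Lambda U^\top) = \sum_{t=1}^T\big(\Lambda_{tt}^{-1} + \tfrac12(\Lambda_{tt}-\Sigma_{tt})^2\big)$, so the problem separates into the scalar minimizations $\min_{\lambda>0}\varphi_t(\lambda)$, $\varphi_t(\lambda) = \lambda^{-1} + \tfrac12(\lambda-\Sigma_{tt})^2$. Each $\varphi_t$ is strictly convex on $(0,\infty)$ since $\varphi_t''(\lambda) = 2\lambda^{-3}+1>0$, and $\varphi_t(\lambda)\to+\infty$ as $\lambda\to0^+$ and as $\lambda\to+\infty$, so it has a unique interior minimizer, characterized by $\varphi_t'(\lambda) = -\lambda^{-2}+\lambda-\Sigma_{tt}=0$; multiplying by $\lambda^2$ yields exactly $p(\lambda) = \lambda^3 - \lambda^2\Sigma_{tt} - 1 = 0$. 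Finally I would note $p$ has a unique positive root so the prescription is unambiguous: $p(0)=-1<0$, $p(\lambda)\to+\infty$, and since $p'(\lambda)=\lambda(3\lambda-2\Sigma_{tt})$, on $(0,\infty)$ the polynomial is monotone increasing when $\Sigma_{tt}\le0$ and first decreasing then increasing when $\Sigma_{tt}>0$, in either case crossing zero exactly once. Taking $\Lambda_{tt}$ to be this root for each $t$ and assembling $U\Lambda U^\top$ completes the proof.

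I expect the only step requiring genuine care to be the reduction to the common eigenbasis: one must first justify that the infimum over the open cone $S_{++}^T$ is attained before writing down stationarity conditions, and then correctly evaluate $\nabla_A\tr(A^{-1}) = -A^{-2}$ to deduce $A^\star Z = Z A^\star$. Everything downstream is the routine one-variable cubic analysis sketched above.
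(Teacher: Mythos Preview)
Your proposal is correct and follows essentially the same route as the paper: establish well-posedness, write the first-order condition $-(A^\star)^{-2}+A^\star-Z=0$, reduce to the diagonal case, and solve the resulting cubics. The only cosmetic differences are that the paper verifies the ansatz $A_\star=U\Lambda U^\top$ against the stationarity equation (relying on uniqueness to conclude), whereas you \emph{derive} the common eigenbasis from $Z=A^\star-(A^\star)^{-2}$; and the paper invokes Descartes' rule of signs for the unique positive root where you give a direct monotonicity argument.
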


\begin{proof}
Note that $H_2$ is convex and lsc. Therefore the proximity operator is well-defined and the functional 
in~\eqref{eq:prox_h2} has a unique minimizer. Its gradient is $-A^{-2} + A - Z$, therefore, the first order condition for a matrix $A$ to be a minimizer is
\begin{equation}\label{eq:first_order_condition}
A^3 - A^2Z - I_T = 0
\end{equation}
We show that it is possible to find $\Lambda \in S_{++}^T$ diagonal such that $A_* = U \Lambda U^\top$ solves eq.~\eqref{eq:first_order_condition}. Indeed, for $A$ with same set of eigenvectors $U$ as $Z$, we have that eq.~\eqref{eq:first_order_condition} becomes $U(\Lambda^3 - \Lambda^2 \Sigma - I_T)U^\top = 0$, which is equivalent to the set of $T$ scalar equations $\lambda^3 - \lambda^2 \Sigma_{tt} - 1 = 0$ for $t\in \{1,\dots,T\}$ and $\lambda\in\mathbb{R}$. Descartes rule of sign~\cite{struik86} assures that for any $\Sigma_{tt} \in \mathbb{R}$ each of these polynomials has exactly one positive root, which can be clearly computed in closed form.
\end{proof}

\begin{algorithm}[t]
   \caption{\textsc{Sparse Kernel MTL}}
   \label{alg:srl}
\begin{algorithmic}
   \State {\bfseries Input:} $K \in S_{+}^n$, $B \in \mathbb{R}^{n \times T}$, $\delta$ tolerance, $0 \leq \mu \leq 1, \epsilon > 0$ hyperparameter.
   \State {\bfseries Initialize:} $A_0,D_0 \in S_{++}^T$, $M = (B^\top K B + \epsilon I_T)^{-1/2}$, $\sigma = \| M\|^2$ squared maximum eigenvalue of $M$. $i=0$
   \Repeat

       \State $A_{i+1} \gets \prox_{\frac{1-\mu}{\sigma} \|\cdot\|_{\ell_1}} (A_i - \frac{1}{\sigma} (\mu I_T + M D_i M) )$

       \State $P \gets D_i + \frac{1}{\sigma} M (2A_{i+1} - A_i) M$

       \State $D_{i+1} \gets P - \prox_{\sigma H_2} (\sigma P)$

       \State $i \gets i+1$

   \Until{$ \|A_{i+1} - A_{i} \|_F< \delta$ and $ \|D_{i+1} - D_{i}\|_F < \delta$}
\end{algorithmic}
\end{algorithm}

We have the following result as an immediate consequence.
\begin{theorem}[Convergence of Sparse Kernel MTL, \cite{Bang13,Con13}]\label{thm:bangcon}
Let $k$ be a scalar kernel over a space $\mathcal{X}$, $x_1,\dots,x_n\in\mathcal{X}$ a set of points and $f: \mathcal{X} \to \mathbb{R}^T$ a function characterized by a set of coefficients $b_1,\dots,b_n\in\mathbb{R}^T$ so that $f(\cdot) = \sum_{i=1}^n k(\cdot,x_i) b_i$. Set $K\in S_{+}^n$ to be the empirical kernel matrix associated to $k$ and the points $\{x_i\}_{i=1}^n$ and $B \in \mathbb{R}^{n \times T}$ the matrix whose $i$-th row corresponds to the (transposed) coefficient vector $b_i$.

Then, any sequence of matrices $A_t$ produced by Algorithm~\eqref{alg:srl} converges to a global minimizer of the Sparse Kernel MTL problem~\eqref{eq:sparse_relation_learning} (or, equivalently, to \eqref{eq:actual_srl}) for fixed $f$. Furthermore, the sequence $D_t$ converges to a solution of the dual problem of~\eqref{eq:actual_srl}.
\end{theorem}

\subsection{Convergence of Alternating Minimization}

We additionally exploit the sum structure and the regularity properties of the objective functional 
in \eqref{eq:sparse_relation_learning} to prove  convergence of the alternating minimization 
scheme to a global minimum. We rely on the results in \cite{tseng01}. In particular, the following
result is a direct application of Theorem~4.1 in that paper. 
\begin{theorem}\label{thm:tseng} Under the same assumptions as in Theorem~\ref{thm:bangcon}, the 
sequence $(f_i,A_i)_{i\in\mathbb{N}}$  generated by Algorithm~\ref{alg:bcd}  is a 
minimizing sequence for Problem~\ref{eq:sparse_relation_learning} and converges to its unique 
solution.
 \end{theorem}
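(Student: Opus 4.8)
The plan is to verify that the hypotheses of Theorem~4.1 in \cite{tseng01} hold for the objective functional $S$ of Problem~\eqref{eq:sparse_relation_learning}, viewed as a function of the two block variables $f\in\mathcal{H}$ (equivalently, the coefficient matrix $B\in\mathbb{R}^{n\times T}$ after applying the Representer theorem) and $A\in S_{++}^T$. Recall that Tseng's coordinatewise minimization result requires (i) the objective to be of the form $f_0 + \sum_k f_k$ where $f_0$ is jointly lower semicontinuous and each $f_k$ depends only on a single block, (ii) the objective to have bounded level sets (or at least that the generated sequence stays in a compact set), (iii) some regularity ensuring that the blockwise minimizers are well defined and that limit points are stationary — in the convex setting the key structural assumption is that the nondifferentiable part be block-separable, which here it is: the $\ell_1$ term, the $\tr(A)$ term and the $\tr(A^{-1})$ (with the indicator of $S_{++}^T$) all act on the $A$ block alone, the loss acts on $f$ alone, and the only coupling term $\|f\|_\mathcal{H}^2 = \tr(B^\top K B A^{-1})$ is smooth on the relevant open set.

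First I would record that $S$ is jointly convex in $(f,A)$ — this is exactly the convexity claim the authors attribute to the Appendix, extending \cite{argyriou08} — and strictly convex, indeed coercive, thanks to the $\lambda\|f\|_\mathcal{H}^2$ term in $f$ and the $\lambda\epsilon\tr(A^{-1})+\lambda\mu\tr(A)$ terms in $A$; strict convexity gives the uniqueness of the solution asserted in the statement. Next I would check the separable nonsmooth structure: writing $S(f,A) = V_{\mathrm{tot}}(f) + \lambda\tr(B^\top K B A^{-1}) + \lambda\epsilon\tr(A^{-1}) + i_{S_{++}^T}(A) + \lambda\mu\tr(A) + \lambda(1-\mu)\|A\|_{\ell_1}$, the function $f_0 := V_{\mathrm{tot}}(f) + \lambda\tr(B^\top K B A^{-1})$ is continuous (hence l.s.c.) on $\mathcal{H}\times S_{++}^T$ and the remaining summands are each functions of $A$ only and are proper, convex, l.s.c.; so the block decomposition required by Tseng holds with the coupling confined to the smooth term $f_0$. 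Then I would argue the boundedness of level sets: fix the initial objective value $S(f_0,A_0)$; coercivity in $f$ keeps $\|f\|_\mathcal{H}$ bounded, the $\tr(A)$ term keeps $\|A\|$ bounded above, and the $\tr(A^{-1})$ term keeps $A$ bounded away from the boundary of $S_{++}^T$, so the sublevel set is contained in a compact subset of $\mathcal{H}\times S_{++}^T$. This confinement is what makes the open-domain constraint $A\in S_{++}^T$ harmless for the argument — the iterates never approach the boundary.

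With these pieces in place, I would invoke Theorem~4.1 of \cite{tseng01}: under joint l.s.c. of $f_0$, the block-separability of the nonsmooth part, and the compactness of the sublevel set (plus uniqueness of each blockwise minimizer, which follows from strict convexity of $S(\cdot,A)$ and of $S(f,\cdot)$), every cluster point of the alternating sequence $(f_i,A_i)$ is a coordinatewise minimizer, and in the convex case a coordinatewise minimizer is a global minimizer; moreover the objective values decrease monotonically to the optimal value, so $(f_i,A_i)$ is a minimizing sequence. Since the minimizer is unique and the sequence lies in a compact set, every subsequence has a further subsequence converging to that unique minimizer, which forces the whole sequence to converge to it. The main obstacle I anticipate is matching the exact hypotheses of Tseng's theorem: his statement is phrased for finite-dimensional iterates, so one must either reduce to the coefficient representation $B\in\mathbb{R}^{n\times T}$ via the Representer theorem (legitimate here because the supervised step returns a minimizer of that parametric form) or check that the needed compactness and l.s.c.\ survive in the Hilbert-space setting; and one must confirm that the specific variant of Theorem~4.1 being used does not require differentiability of the coupling term along the $A$-block at points where it is in fact smooth — which it is on $S_{++}^T$ — so this is a matter of careful citation rather than new mathematics.
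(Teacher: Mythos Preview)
Your proposal is correct and follows essentially the same route as the paper: both verify that the objective $S$ has compact level sets, is continuous/regular there, is jointly convex, and then invoke Theorem~4.1 of \cite{tseng01} to conclude that every cluster point of the alternating sequence is the unique minimizer, from which full convergence follows. Your version is more explicit about the block-separable nonsmooth structure and the finite-dimensional reduction via the Representer theorem, whereas the paper's proof is terser and simply appeals to regularity and compactness before citing \cite[Theorem~4.1(c)]{tseng01}; the underlying argument is the same.
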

 \begin{proof}
Let $S$ denote the objective function in \eqref{eq:sparse_relation_learning}.
First note that the level sets of $S$ are compact due to the presence of the  term $\epsilon\tr(A^{-1}) + \mu \tr(A)$
and that $S$ is continuous on each level set. 
Moreover, since $S$ is regular at each point in the interior of the domain and is convex, \cite[Theorem~4.1(c)]{tseng01}
implies that each cluster point of $(f_{i},A_i)_{i\in\mathbb{N}}$ is the unique minimizer of $S$. Then, the sequence itself is convergent
and is minimizing by continuity.
\end{proof}

\subsubsection{A Note on Computational Complexity \& Times}

Regarding the computational costs/number of iterations required for the convergence of the whole Alg.~\ref{alg:bcd}, up to our knowledge the only results available on rates for Alternating Minimization are in~\cite{beck11}. Unfortunately these results hold only for smooth settings. 
Notice however that each iteration of Alg~\ref{alg:srl} is of the order of $O(T^3)$, (the eigendecomposition of A being the most expensive operation) and its convergence rate 
is $O(1/k)$ with $k$ equal to the number of iterations.
Hence, Alg.~\ref{alg:srl} is not affected by the number $n$ of training samples. On the contrary, the supervised step in Agl.~\ref{alg:bcd} (e.g. RLS or SVM) typically requires the inversion of the kernel matrix $K$ (or some approximation of its inverse) whose complexity heavily depends on $n$ (order of $O(n^3)$ for inversion). Furthermore, the product $BKB^\top$ costs $O(n^2T)$ which, since $n>>T$, is more expensive than Alg.~\ref{alg:bcd}. Thus, with respect to $n$ SKMTL scales exactly as methods such as [2,7,24].

\section{Empirical Analysis}\label{sec:empirical}

We report the empirical evaluation of SKMTL on artificial and real datasets.
We have conducted experiments on both artificially generated and real dataset to assess the capabilities of the proposed Sparse Kernel MTL method to recover the most relevant relations among tasks and exploit such knowledge to improve the prediction performance.

\begin{figure}[t]
  \begin{center}
            \includegraphics[width=0.8\columnwidth]{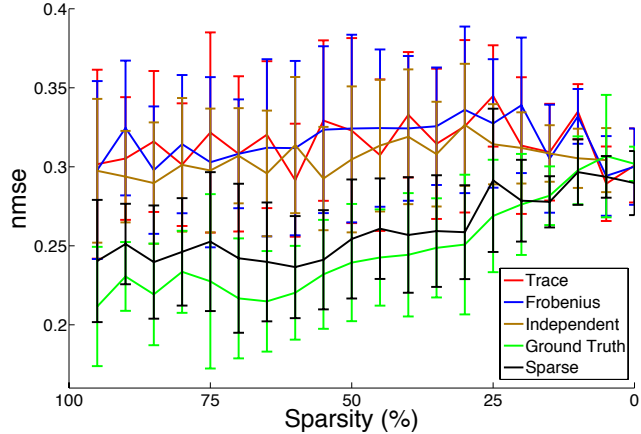}
    \caption{Generalization performance (nMSE and standard deviation) of different multi-task methods with respect to the sparsity of the task structure matrix.}\label{fig:sparsity}
  \end{center}
\end{figure}

\subsection{Synthetic Data}
We considered an artificial setting that allows us to control the tasks structure and in particular the actual sparsity of the tasks-relation matrix. We generated synthetic datasets of input-output pairs $(x,y) \in \mathbb{R}^d \times \mathbb{R}^T$ according to linear models of the form $y^\top = x^\top U A + \epsilon$ where $U\in\mathbb{R}^{d \times T}$ is a matrix with orthonormal columns, $A \in S_+^T$ is the task structure matrix and $\epsilon$ is zero-mean Gaussian noise with variance $0.1$. The inputs $x\in\mathbb{R}^d$ were sampled according to a Gaussian distribution with zero mean and identity covariance matrix. We set the input space dimension $d=100$ for our experiments.

In order to quantitatively control the sparsity level of the tasks-relation matrix, we randomly generated $A$ so that the ratio between its support (i.e. the number of non-zero entries) and the total number of entries would vary between $0.1$ ($90\%$ sparsity) and $1$ (no sparsity). A Gaussian noise with zero mean and variance $1/10$ of the mean value of the non-zero entries in $A$ was sampled to corrupt the structure matrix entries (hence, the model $A$ was never ``really'' sparse). This was done to reproduce a more realistic scenario.

We generated multiple models and corresponding datasets for different sparsity ratios and number of tasks $T$ ranging from $5$ to $20$. For each dataset we generated respectively $50$ samples for training and $100$ for test. We performed multi-task regression using the following methods: single task learning (STL) as baseline, Multi-task Relation Learning~\cite{zhang10} (MTRL), Output Kernel Learning~\cite{dinuzzo11} (OKL), our Sparse Kernel MTL (SKMTL) and a fixed task-structure multi-task regression algorithm solving problem~\eqref{eq:learning_problem} using the ground truth (GT) matrix $A$ (after noise corruption) for regularization. We chose least-square loss and performed model selection with five-fold cross validation.

\begin{figure}[t]
\begin{center}
    \includegraphics[height=.3\columnwidth]{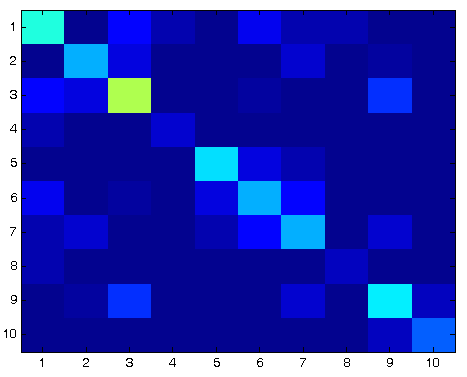}\quad\quad
    \includegraphics[height=.3\columnwidth]{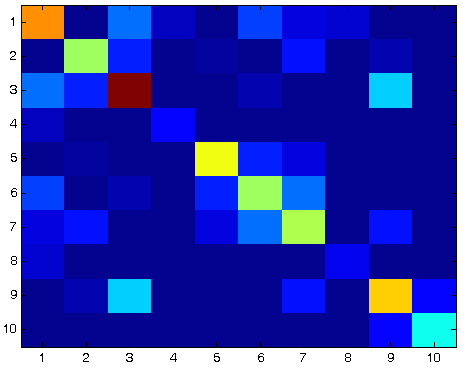}
    \caption{Structure matrix $A$. True (Left) and recovered by Sparse Kernel MTL (Right). We report the absolute value of the entries of the two matrices. The range of values goes from 0 (Blue) to 1 (Red)}
    \label{fig:recovered}
\end{center}
\end{figure}

In Figure~\ref{fig:sparsity} we report the normalized mean squared error (nMSE) of tested method with respect to decreasing sparsity ratios. It can be noticed that knowledge of the true $A$ (GT) is particularly beneficial when the tasks share few relations. This advantage tends to decrease as the tasks structure becomes less sparse. Interestingly, both the MTRL and OKL method do not provide any advantage with respect to the STL baseline since we did not design $A$ to be low-rank (or have a fast eigenvalue decay). On the contrary, the SKMTL method provides a remarkable improvement over the STL baseline. 

We point out that the large error bars in the plot are due to the high variability of the nMSE with respect to the different (random) linear models $A$ and number of tasks $T$. The actual improvement of the SKMTL over the other methods is however significant.

The results above suggest that, as desired, our SKMTL method is actually recovering the most relevant relations among tasks. In support of this statement we report in Figure~\ref{fig:recovered} an example of the true (uncorrupted) and recovered structure matrix $A$ in the case of $T=10$ and $50\%$ sparsity. As can be noticed, while the actual values in the entries of the two matrices are not exactly the same, their supports almost coincide, showing that SKMTL was able to recover the correct tasks structure.

\subsection{15-Scenes}

We tested SKMTL in a multi-class classification scenario for visual scene categorization, the $15$-scenes dataset\footnote{http://www-cvr.ai.uiuc.edu/ponce\_grp/data/}. The dataset contains images depicting natural or urban scenes that have been organized in $15$ distinct groups and the goal is to assign each image to the correct scene category. It is natural to expect that categories will share similar visual features. Our aim was to investigate whether these relations would be recovered by the SKMTL method and result beneficial to the actual classification process.

We represented images in the dataset with LLC coding~\cite{wang10}, trained multi-class classifiers on $50$, $100$ and $150$ examples per class and tested them on $500$ samples per class. We repeated these classification experiments $20$ times to account for statistical variability.
 
In Table~\ref{tab:15scenes} we report the classification accuracy of the multi-class learning methods tested: STL (baseline), Multi-task Feature Learning (MTFL)~\cite{argyriou08}, MTRL, OKL and our SKMTL. For all methods we used a linear kernel and least-squares loss as plug-in classifier. Model selection was performed by five-fold cross-validation.

\begin{table}[t]
\begin{center}
\begin{tabular}{lccc}
    &  \multicolumn{3}{c}{\bf Accuracy (\%) per}  \tstrut \bstrut \\
    &  \multicolumn{3}{c}{\bf \# tr. samples per class}  \tstrut \bstrut \\
    & $50$ & $100$ & $150$ \tstrut \bstrut \\
    \specialrule{.1em}{.05em}{.0em}

    \multirow{2}{*}{\bf STL}                    & $72.23$ & $76.61$ & $79.23$ \tstrut \bstrut \\
                                                & \cellcolor{gray!35} $\pm 0.04$ & \cellcolor{gray!35} $\pm 0.02$ & \cellcolor{gray!35} $\pm 0.01$ \tstrut \bstrut \\

    \multirow{2}{*}{\bf MTFL~\cite{argyriou08}} & $73.23$& $77.24$ & $80.11$ \tstrut \bstrut \\
                                                & \cellcolor{gray!35} $\pm 0.08$& \cellcolor{gray!35} $\pm 0.05$ & \cellcolor{gray!35} $\pm 0.03$ \tstrut \bstrut \\
    \multirow{2}{*}{\bf MTRL~\cite{zhang10}}   & $73.13$& $77.53$ & $80.21$ \tstrut \bstrut \\
                                                & \cellcolor{gray!35} $\pm 0.08$& \cellcolor{gray!35} $\pm 0.04$ & \cellcolor{gray!35} $\pm 0.05$ \tstrut \bstrut \\
    \multirow{2}{*}{\bf OKL~\cite{dinuzzo11}}   & $72.25$& $77.06$ & $80.03$ \tstrut \bstrut \\
                                                & \cellcolor{gray!35} $\pm 0.03$& \cellcolor{gray!35} $\pm 0.01$ & \cellcolor{gray!35} $\pm 0.01$ \tstrut \bstrut \\
    \multirow{2}{*}{\bf SKMTL}                 & $\mathbf{73.50}$& $\mathbf{78.23}$ & $\mathbf{81.32}$ \tstrut \bstrut \\  
                                                & \cellcolor{gray!35} $\pm 0.11$& \cellcolor{gray!35} $\pm 0.06$ & \cellcolor{gray!35} $\pm 0.08$ \tstrut \bstrut \\  
    \end{tabular}
\end{center}
\caption{Classification results on the $15$-scene dataset. Four multi-task methods and the single-task baseline are compared.}
\label{tab:15scenes}
\end{table}

As it can be noticed, the SKMTL consistently outperforms all other methods. A possible motivation for this behavior, similarly to the synthetic scenario, is that the algorithm is actually recovering the most relevant relations among tasks and using this information to improve prediction. In support of this interpretation, in Figure~\ref{fig:structure} we report  the relations recovered by SKMTL in graph form. An edge between two scene categories $t$ and $s$ was drawn whenever the value of the corresponding entry $A_{ts}$ of the recovered structure matrix was different from zero. Noticeably SKMTL seems to identify a clear group separation between natural and urban scenes. Furthermore, also within these two main clusters, not all tasks are connected: for instance office scenes are not related to scenes depicting the exterior of buildings or mountain scenes are not connected to images featuring mostly flat scenes such as highways or coastal regions.

\begin{figure}[t]
  \begin{center}
            \includegraphics[width=0.8\columnwidth]{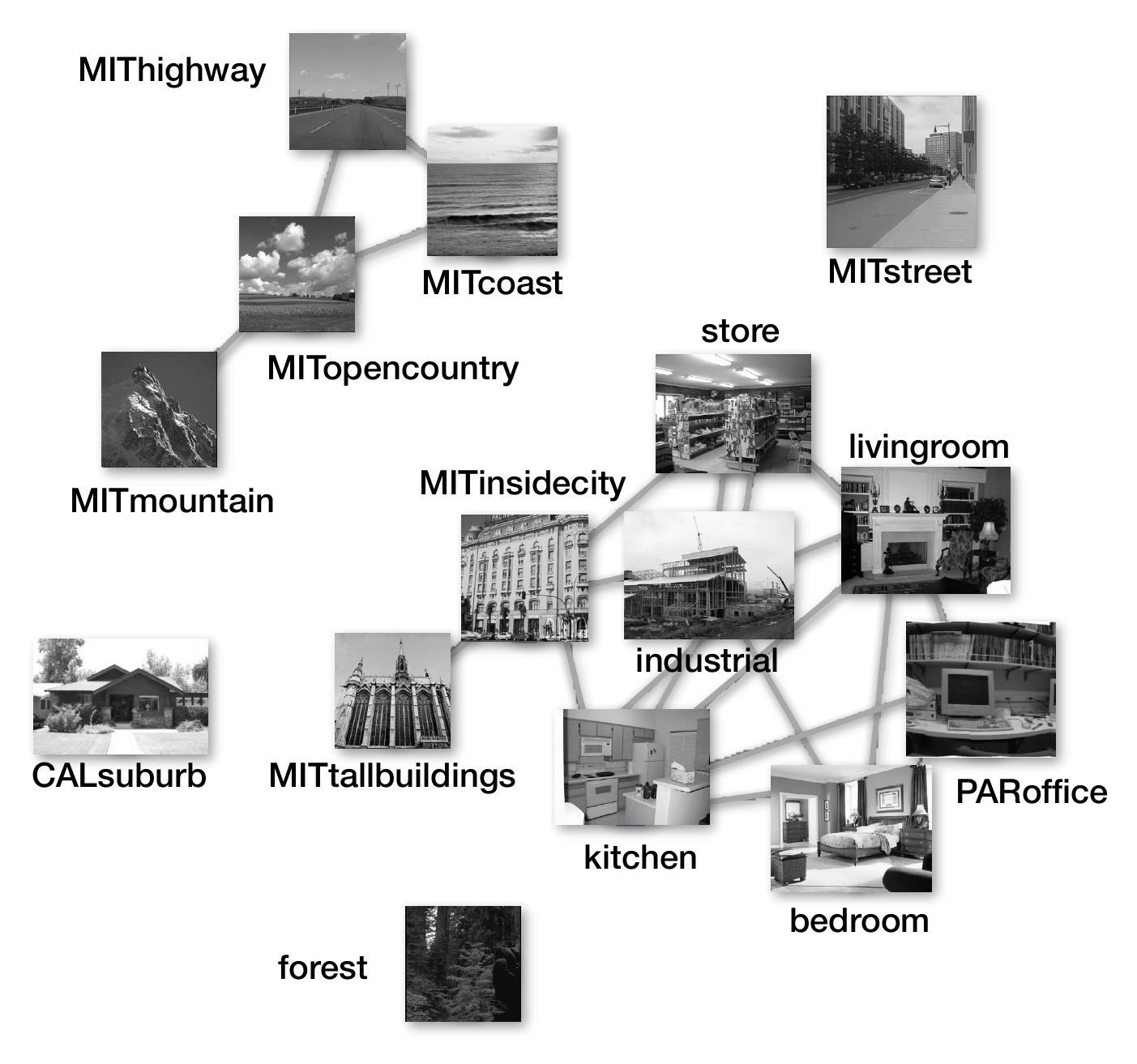}
    \caption{Tasks structure graph recovered by the Sparse Kernel MTL (SKMTL) proposed in this work on the $15$-scenes dataset.}\label{fig:structure}
  \end{center}
\end{figure}

\subsection{Animals with Attributes}\label{sec:awa}

Animals with Attributes\footnote{http://attributes.kyb.tuebingen.mpg.de/} (AwA) is a dataset designed to benchmark detection algorithms in computer vision. The dataset comprises $50$ different animal classes each annotated with $85$ binary labels denoting the presence/absence of different attributes. These attributes can be of different nature such as color (white, black, etc.), texture (stripes, dots), type of limbs (hands, flippers, etc.), diet and so on. The standard challenge is to perform attribute detection by training the system on a predefined set of $40$ animal classes and testing on the remaining $10$. In the following we will first discuss the performance of multi-task approaches in this setting and then investigate how the benefits of multi-task approaches can sometime be dulled by the so-called ``negative transfer'' and how our Sparse Kernel MTL method seems to be less sensitive to such an issue. For the experiments described in the following we used the DECAF features~\cite{Donahue13} recently made available on the Animals With Attribute website.

\subsubsection{Attribute Detection}
We considered the multi-task problem of attribute detection which consists in $85$ classification (binary) tasks. For each attribute, we randomly sampled $50$, $100$ and $150$ examples for training, $500$ for validation and $500$ for test. Results were averaged over $10$ trials. In Table~\ref{tab:awa_complete} we report the Average Precision (area under the precision/recall curve) of the multi-task classifiers tested. As can be noticed for all multi-task approaches, the effect of sharing information across classifiers seems to have a remarkable impact when few training examples are available (the $50$ or $100$ columns in Table~\ref{tab:awa_complete}). As expected, such benefit decreases as the role of regularization becomes less crucial ($150$).

\begin{table}[t]
\begin{center}
\rowcolors{3}{}{gray!35}
\begin{tabular}{lccc}
     & \multicolumn{3}{c}{\bf AUC (\%) per \#tr. samples per class} \tstrut \bstrut \\
    &  50 & 100 & 150 \tstrut \bstrut \\
    \specialrule{.1em}{.05em}{.0em} 

    {\bf STL}       & $57.26 \pm 1.71$  & $60.73 \pm 1.12$  & $64.37 \pm 1.29$ \tstrut \bstrut \\
    {\bf MTFL}      & $58.11 \pm 1.23$  & $61.21 \pm 1.14$  & $64.22 \pm 1.10$ \tstrut \bstrut \\
    {\bf MTRL}      & $58.24 \pm 1.84$  & $61.18 \pm 1.23$  & $64.56 \pm 1.41$ \tstrut \bstrut \\
    {\bf OKL}       & $\mathbf{58.81 \pm 1.18}$  & $62.07 \pm 1.05$  & $64.26 \pm 1.18$ \tstrut \bstrut \\
    {\bf SKMTL}    & $58.63 \pm 1.73$  & $\mathbf{63.21 \pm 1.43}$  & $64.51 \pm 1.83$ \tstrut \bstrut \\

    \end{tabular}
\end{center}
\caption{Attribute detection results on the Animals with Attributes dataset.}
\label{tab:awa_complete}
\end{table}

\subsubsection{Attribute Prediction - Color Vs Limb Shape}
Multi-task learning approaches ground on the assumption that tasks are strongly related one to the other and that such structure can be exploited to improve overall prediction. When this assumption doesn't hold, or holds only partially (e.g. only {\it some} tasks have common structure), such methods could even result disadvantageous (``negative transfer''~\cite{salakhutdinov11}). 


The AwA dataset offers the possibility to observe this effect since attributes are organized into multiple semantic groups~\cite{lampert11,jayaraman14}. We focused on a smaller setting by selecting only two group of tasks, namely {\it color} and {\it limb shape}, and tested the effect of training multi-task methods jointly or independently across such two groups. For all the experiments we randomly sampled for each class $100$ examples for training, $500$ for validation and $500$ for test, averaging the system performance over $10$ trials. Table~\ref{tab:awa_small} reports the average precision separately for the color and limb shape groups. 

Interestingly, methods relying on the assumption that all tasks share a common structure, such as MTFL, MTRL or OKL, experience a slight drop in performance when trained on all attribute detection tasks together (right columns) rather than separately (left column). On the contrary, SKMTL remains stable since it correctly separates the two groups. 


\begin{table}[t]
\begin{center}
\begin{tabular}{lcccc}
    & \multicolumn{4}{c}{\bf Area under PR Curve (\%)} \tstrut \bstrut \\
     & \multicolumn{2}{c}{\bf Independent} & \multicolumn{2}{c}{\bf Joint} \tstrut \bstrut \\
    & {\bf Color} & {\bf Limb} & {\bf Color} & {\bf Limb} \tstrut \bstrut \\
    \specialrule{.1em}{.05em}{.0em} 

    \multirow{2}{*}{\bf STL}        & $74.33$  & $68.13$  & $74.33$ & $68.15$ \tstrut \bstrut \\
                                    & \cellcolor{gray!35} $\pm 0.81 $ & \cellcolor{gray!35} $\pm 0.93 $ & \cellcolor{gray!35} $\pm 0.81 $ & \cellcolor{gray!35} $\pm 0.91 $ \tstrut \bstrut \\

    \multirow{2}{*}{\bf MTFL}      & $75.21$  & $69.41$  & $74.98$  & $69.71$ \tstrut \bstrut \\
                                    & \cellcolor{gray!35} $\pm 0.73 $ & \cellcolor{gray!35} $\pm 1.01 $ & \cellcolor{gray!35} $\pm 1.18 $ & \cellcolor{gray!35} $\pm 0.81 $ \tstrut \bstrut \\
    \multirow{2}{*}{\bf MTRL}      & $75.17$  & $69.18$  & $74.92$  & $69.73$ \tstrut \bstrut \\
                                    & \cellcolor{gray!35} $\pm 0.53 $ & \cellcolor{gray!35} $\pm 0.64 $ & \cellcolor{gray!35} $\pm 0.78 $ & \cellcolor{gray!35} $\pm 0.75 $ \tstrut \bstrut \\
    \multirow{2}{*}{\bf OKL}       & $74.52$  & $68.54$  & $74.31$  & $68.44$ \tstrut \bstrut \\
                                    & \cellcolor{gray!35} $\pm 0.44 $ & \cellcolor{gray!35} $\pm 0.61$ & \cellcolor{gray!35} $\pm 0.54 $ & \cellcolor{gray!35} $\pm 0.22 $ \tstrut \bstrut \\
    \multirow{2}{*}{\bf SKMTL}    & $75.14$  & $69.21$  & $75.23$  & $69.57$ \tstrut \bstrut \\
                                    & \cellcolor{gray!35} $\pm 0.97 $ & \cellcolor{gray!35} $\pm 0.83 $ & \cellcolor{gray!35} $\pm 0.77 $ & \cellcolor{gray!35} $\pm 0.76 $ \tstrut \bstrut \\

    \end{tabular}
\end{center}
\caption{Attribute detection on two subsets of AwA. Comparison between methods trained independently or jointly on the two sets show the effects of negative transfer.}
\label{tab:awa_small}
\end{table}

\section{Conclusions}

We proposed a learning framework designed to solve multiple related tasks while simultaneously recovering their structure. We considered the setting of Reproducing Kernel Hilbert Spaces for vector-valued functions~\cite{micchelli04} and formulated the Sparse Kernel MTL as an output kernel learning problem where both a multi-task predictor and a matrix encoding the tasks relations are inferred from empirical data. We imposed a sparsity penalty on the set of possible relations among tasks in order to recover only those that are more relevant to the learning problem.

Adopting an alternating minimization strategy we were able to devise an optimization algorithm that provably converges to the global solution of the proposed learning problem. Empirical evaluation on both synthetic and real dataset confirmed the validity of the model proposed, which successfully recovered interpretable structures while at the same time outperformed previous methods.

Future research directions will focus mainly on modeling aspects: it will be interesting to investigate the possibility to combine our framework, which identifies sparse relations among the tasks, with recent multi-task linear models that take a different perspective and enforce tasks relations in the form of structured 
sparsity penalties on the feature space~\cite{jayaraman14,zhong12}.

\newpage

{\small
\bibliographystyle{ieee}
\bibliography{egbib}
}

\newpage

\section{Appendix}

\subsection{On the (joint) convexity of Sparse Kernel MTL}

As stated in the paper, it can be shown that the Sparse Kernel MTL problem introduced in Eq.~\eqref{eq:sparse_relation_learning} is jointly convex in the two optimization variables $f$ and $A$. The proof of this fact requires the introduction of functional analysis tools that are beyond the scope of this work. Indeed, according to equation~\eqref{eq:eq} we have observed that it is possible to restrict the SKMTL problem to functions of the form $f(\cdot) = \sum_{i=1}^n k(\cdot,x_i)b_i$ with $b_i\in\mathbb{R}^T$. The following result proves the joint-convexity of Eq.~\eqref{eq:sparse_relation_learning} for this setting. It is an extension of similar results in~\cite{argyriou08,zhang10} and we give it here for completeness. 

\begin{proposition}
Let $V:\mathbb{R}^T \to \mathbb{R}^T \to \mathbb{R}_+$ be a convex loss function. Then the functional in problem~\eqref{eq:sparse_relation_learning} -- restricted to functions $f$ of the form $f(\cdot) = \sum_{i=1}^n k(\cdot,x_i)b_i$ with $b_i\in\mathbb{R}^T$ -- is convex in both $f$ and $A$.
\end{proposition}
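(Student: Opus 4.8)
The plan is to reduce the joint convexity to two facts: (i) the loss-plus-linear terms are convex by composition with affine maps, and (ii) the only genuinely coupled term, namely $\|f\|_{\mathcal{H}}^2$, is jointly convex in $(B,A)$ when expressed via Eq.~\eqref{eq:eq} as $\tr(B^\top K B A^{-1})$. Since we restrict to functions $f(\cdot)=\sum_{i=1}^n k(\cdot,x_i)b_i$, the pair $f \leftrightarrow B\in\mathbb{R}^{n\times T}$ is a linear bijection, so it suffices to prove convexity of the functional as a function of $(B,A)\in\mathbb{R}^{n\times T}\times S_{++}^T$. First I would dispatch the easy terms: each $V(y_i,f(x_i))=V(y_i,(K B)_i)$ is convex in $B$ because $B\mapsto (KB)_i$ is linear and $V$ is convex in its second argument; the terms $\epsilon\tr(A^{-1})$, $\mu\tr(A)$ and $(1-\mu)\|A\|_{\ell_1}$ are convex in $A$ (the first because $A\mapsto\tr(A^{-1})$ is convex on $S_{++}^T$, the others being linear and a norm respectively); and all of these trivially remain convex as functions of the joint variable $(B,A)$.

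The core step is to show $(B,A)\mapsto \tr(B^\top K B A^{-1})$ is jointly convex on $\mathbb{R}^{n\times T}\times S_{++}^T$. Here I would follow the approach of~\cite{argyriou08}: write $K = K^{1/2}K^{1/2}$ and set $\tilde{B}=K^{1/2}B$, so the term becomes $\tr(\tilde{B}^\top \tilde{B} A^{-1})=\tr(A^{-1}\tilde{B}^\top\tilde{B})$; since $\tilde B$ is linear in $B$ and composition with an affine map preserves convexity, it is enough to prove that $h(X,A):=\tr(A^{-1}X^\top X)$ is jointly convex. This is the matrix-fractional function: one clean way is to write it row-wise as $\sum_{i} x_i^\top A^{-1} x_i$ where $x_i$ is the $i$-th row of $X$, and invoke the standard fact (e.g.\ from convex analysis of quadratic-over-linear in the matrix sense) that $(x,A)\mapsto x^\top A^{-1}x$ is jointly convex on $\mathbb{R}^T\times S_{++}^T$. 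Alternatively, and perhaps cleaner to write out, I would use the Schur-complement epigraph characterization: $\tr(A^{-1}X^\top X)\le t$ iff there is a symmetric $Z$ with $\tr(Z)\le t$ and $\begin{pmatrix} Z & X^\top \\ X & A\end{pmatrix}\succeq 0$, and the latter is a convex (LMI) condition jointly in $(X,A,Z,t)$, whence the epigraph of $h$ is convex, i.e.\ $h$ is jointly convex.

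Finally I would assemble the pieces: the objective is a sum of finitely many jointly convex functions of $(B,A)$ (the loss terms, the regularizers on $A$, and $\lambda\,\tr(B^\top K B A^{-1})$), hence jointly convex; translating back through the bijection $B\leftrightarrow f$ gives joint convexity in $(f,A)$ on $\mathcal{H}\times S_{++}^T$. The main obstacle — really the only nontrivial point — is the joint convexity of the matrix-fractional term $\tr(B^\top K B A^{-1})$; everything else is bookkeeping. I expect the Schur-complement argument to be the most robust route, since it avoids having to manipulate second derivatives of a matrix inverse and makes the joint dependence transparent. One should also note in passing that restricting $A$ to $S_{++}^T$ (rather than $S_+^T$) is exactly what makes $A^{-1}$ and hence the term well-defined and the domain convex, which is consistent with the role of the $\epsilon\tr(A^{-1})$ penalty in the paper.
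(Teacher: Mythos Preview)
Your proposal is correct and follows essentially the same strategy as the paper's own proof: both isolate $\|f\|_{\mathcal H}^2=\tr(B^\top K B A^{-1})$ as the only genuinely coupled term and establish its joint convexity in $(B,A)$ via a Schur-complement description of the epigraph, with the remaining terms handled exactly as you indicate. The only difference is in the bookkeeping before the Schur step: the paper vectorizes $b=\mathrm{vec}(B)$, rewrites the term as $b^\top(A^{-1}\otimes K)b$, and applies the Schur complement to the block matrix $\begin{pmatrix} A\otimes K^\dagger & b\\ b^\top & c\end{pmatrix}$, whereas you first absorb $K^{1/2}$ into $B$ and then reduce to the matrix-fractional function $\tr(A^{-1}X^\top X)$, handled either row-wise as a sum of $x_i^\top A^{-1}x_i$ or via the LMI $\begin{pmatrix} Z & X^\top\\ X & A\end{pmatrix}\succeq 0$ with a trace slack. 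Your variant is arguably tidier when $K$ is only positive semidefinite, since it sidesteps the pseudoinverse $K^\dagger$ and the attendant range conditions that the paper's Kronecker-Schur identity tacitly relies on; conversely, the paper's version keeps everything in one quadratic form and avoids introducing the auxiliary $Z$.
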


\begin{proof}
Notice that, the only term that requires some care is the component of the functional that is mixing $f$ and $A$ together, namely $\|f\|_{\mathcal{H}}$ (where the dependency to $A$ is implicit in $\mathcal{H}$. Indeed, since $V$ is chosen to be convex, the empirical risk term is clearly convex in $f$ and does not depend on $A$, while all the remaining terms are -- i.e. the $tr(A^{-1})$, $tr(A)$ and $\|A\|_{\ell_1}$ -- penalize only the structure matrix $A$ and are clearly convex with respect to it.

According to Eq.~\eqref{eq:eq} $f(\cdot) = \sum_{i=1}^n k(\cdot,x_i)b_i$ and we have that $\|f\|^2_{\mathcal{H}}$ can be rewritten as $\|f\|_{\mathcal{H}}^2 = \tr(B^\top K B A^{-1}))$, with $K \in S_+^n$ the empirical kernel matrix and $B\in\mathbb{R}^{n\times T}$ the matrix whose rows correspond to $b_i^\top$. Let us now set $b = vec(B) \in\mathbb{R}^{nT}$ the vectorization of matrix $B$, obtained by concatenating the columns of $B$. Then we have that
\begin{equation}
tr(B^\top K B A^{-1}) = b^T (A^{-1} \otimes K) b.
\end{equation}
In order to show that the function $Q(A,b) = b^T (A^{-1} \otimes K) b$ is jointly convex in $b$ and $A$ we will show that its epigraph is a convex set. To see this notice that 
\begin{equation}
\begin{aligned}
epi_Q = \{(A,b,c) \in S_{++}^T \times \mathbb{R}^{nT} \times \mathbb{R} \ | \ c \geq w^\top (A^{-1} \otimes K) w \} \\ = \{(A,b,c) \in S_{++}^T \times \mathbb{R}^{nT} \times \mathbb{R} \ | \ \left( \begin{array}{cc}  A \otimes K^{\dagger} & b \\ b^\top & c  \end{array} \right) \in S_+^{nt+1} \}     
\end{aligned}
\end{equation}
where the second equality is directly derived from a Schur's complement argument. Consider now any couple of points $(A_1,b_1,c_1),(A_2,b_2,c_2) \in epi_Q$ and any $\theta \in [0,1]$. We clearly have that the convex combination 
\begin{multline}
    \theta \left( \begin{array}{cc}  A_1 \otimes K^{\dagger} & b_1 \\ b_1^\top & c_1  \end{array} \right) + (1-\theta) \left( \begin{array}{cc}  A_2 \otimes K^{\dagger} & b_2 \\ b_2^\top & c_2  \end{array} \right) \\
    =  \left( \begin{array}{cc}  \theta A_1 \otimes K^{\dagger} + (1-\theta) A_2 \otimes K^{\dagger} & \theta b_1 + (1-\theta) b_2 \\ \theta b_1^\top + (1-\theta) b_2^\top & \theta c_1 + (1-\theta)c_2  \end{array} \right)
\end{multline}
still belongs to $S_+^{nT+1}$, which implies that
\begin{equation}
    (\theta A_1 + (1-\theta) A_2, \theta b_1 + (1-\theta) b_2, \theta c_1 + (1-\theta) c_2) \in epi_Q
\end{equation}
therefore proving that $Q$ is jointly convex in $b$ and $A$.

\end{proof}

\subsection{Cluster Multi-task Learning}

We briefly recall here the Convex Multi-task Cluster Learning proposed in~\cite{jacob08} and show that it can be cast in the same framework as that of our Sparse Kernel MTL model. In particular we comment what choice of constraint set $\mathcal{A}$ can be imposed on the structure matrix $A$ to recover clustered structures of tasks.

In the setting proposed by~\cite{jacob08}, tasks are assumed to belong to one of $r$ of unknown clusters, with $r$ fixed a priori. While the original formulation is for the linear kernel, it can be easily extended to the non-linear setting of RKHSvv. Let $E\in\{0,1\}^{T \times r}$ be the binary matrix whose entry $E_{st}$ has value $1$ whenever a task $s$ belongs to cluster $t$, and $0$ otherwise. 
Let $L$ be the normalized Laplacian of the Graph defined by $E$. Set $M=I - L$, and $U=\frac{1}{T}11^{\top}$. 
As we have observed in Eq.~\eqref{eq:eq}, the regularizer $\|f\|_{\mathcal{H}}$ depends on $A^{-1}$. The role of this term could be shaped to reflect the structure of the clusters encoded in the Laplacian $L$, hence in the matrix $M$.
As noted in ~\cite{jacob08} $A^{-1}(M)$ can be chosen so that:
\begin{equation}
    A^{-1}(M)=\epsilon_{M}U+\epsilon_{B}(M-U)+\epsilon_{W}(I-M),
\end{equation}
where the first term is a global penalty on the average predictor, the second term penalizes the between cluster variance, and the third term penalizes the within cluster variance. Since $M$ belongs to a discrete set, the authors propose a relaxation for $M$ by constraining it to be in a convex set $\mathcal{S}_c=\{ M \in S^{T}_{+} , 0\preceq M \preceq I, \tr(M)=r \}$ which directly induces a set $\mathcal{A}$ of spectral constraints for $A$.

\end{document}